\documentclass{article}
\usepackage{ijcai26}

\usepackage{times}
\usepackage{soul}
\usepackage{url}
\usepackage[hidelinks]{hyperref}
\usepackage[utf8]{inputenc}
\usepackage[small]{caption}
\usepackage{graphicx}
\usepackage{amsmath}
\usepackage{amsthm}
\usepackage{booktabs}
\usepackage{algorithm}
\usepackage{algorithmic}
\usepackage[switch]{lineno}

\usepackage{amssymb,amsfonts}
\newtheorem{definition}{Definition}
\usepackage{subfigure}        
\usepackage{subcaption}
\usepackage{svg}
\usepackage{graphicx}
\usepackage{todonotes}
\usepackage{textcomp}
\usepackage{xcolor}
\def\BibTeX{{\rm B\kern-.05em{\sc i\kern-.025em b}\kern-.08em
    T\kern-.1667em\lower.7ex\hbox{E}\kern-.125emX}}

\newcommand\Un{\mathcal{U}}
\newcommand\F{
    \protect\tikz[baseline]{
    \protect\path[draw,line width=.12ex,line join=round]
      (0ex,.6ex) -- (.95ex,1.55ex) -- (1.9ex,.6ex) -- (.95ex,-.35ex) -- cycle;
}}
\newcommand\G{
    \protect\tikz[baseline]{
    \protect\path[draw, line width=.12ex, line join=round]
      (0ex,-.2ex) -- (0ex,1.3ex) -- (1.5ex,1.3ex) -- (1.5ex,.-.2ex) -- cycle;
}}

\newcommand\Leaf{\ensuremath\mathsf{Leaf}}
\newcommand\Fallback{\ensuremath\mathsf{Fback}}
\newcommand\Sequence{\ensuremath\mathsf{Seq}}
\newcommand\Parallel{\ensuremath\mathsf{Par}}

\newcommand\X{\bigcirc}
\newcommand\tree{\mathcal{T}}

\newcommand\true{\mathit{true}}
\newcommand\false{\mathit{false}}
\newcommand\commentout[1]{}

\usepackage{tikz}
\usetikzlibrary{positioning,arrows,petri,fit,backgrounds}

\tikzset{
    place/.style={
        circle,
        thick,
        minimum size=6mm,
        draw
    },
    transitionV/.style={
        rectangle,
        thick,
        fill=black,
        minimum height=6mm,
        inner xsep=1pt
    },
    transitionH/.style={
        rectangle, 
        thick,
        fill=black,
        minimum width=6mm, 
        inner ysep=1pt
    },
    transitionD/.style={
        rectangle, 
        thick,
        fill=black,
        minimum width=6mm, 
        inner ysep=1pt,
        rotate=45,
    },
    transitionD2/.style={
        rectangle, 
        thick,
        fill=black,
        minimum width=6mm, 
        inner ysep=1pt,
        rotate=-45,
    },
}
\tikzset{
  weight/.style={
    pos=0.5,
    sloped,
    above,
    overlay,
    fill=white,
    inner sep=1pt,
    font=\small
  }
}
\tikzset{
  weight2/.style={
    pos=0.5,
    above right=-0.01cm,
    overlay,
    fill=white,
    inner sep=1pt,
    font=\small
  }
}

\newtheorem{example}{Example}
\newtheorem{theorem}{Theorem}

\title{A Reward-Petri-Net Interpretation of Temporal Behavior Trees}

\author{
Till Schmeil$^{1,2}$ \and
Günther Waxenegger-Wilfing$^{1,2}$ \and
Sebastian Schirmer$^2$
\affiliations
$^1$University of Würzburg, Germany\\
$^2$German Aerospace Center (DLR), Germany\\
\emails
till.schmeil@uni-wuerzburg.com
}

\begin{document}

\maketitle

\begin{abstract}
This paper introduces an interpretation of Temporal Behavior Trees (TBTs) as Reward-Petri-Nets (RPNs) for reinforcement learning (RL). Designing reward functions for complex, long-horizon robotic tasks is notoriously difficult, especially when tasks have hierarchical structure and temporal constraints. TBTs extend conventional behavior trees (BTs) used in robotic applications by incorporating temporal properties into their leaf nodes.
This allows TBTs to represents not only the behavioral task structure defined by BT operators such as Sequence, Fallback, and Parallel, but also the task's temporal constraints. In this work, the constraints are specified in the leaf nodes using Linear Temporal Logic. In order to inform RL rewards using TBTs, we provide a translation from TBT into a Petri Net (PN) and show how rewards can be automatically assigned based on the TBT's structure, resulting in a RPN. In a series of increasingly challenging environments, we demonstrate how TBT-based rewards enable learning where vanilla RL fails, improve sample efficiency, and offer flexible, intuitive control over the learning progress. We showcase the learning impact by using different reward distribution schemes and TBT structures.
\end{abstract}

\commentout{
\begin{abstract}
This paper introduces an interpretation of Temporal Behavior Trees (TBTs) as Reward-Petri-Nets (RPNs) for reinforcement learning (RL).
Designing reward functions for complex, long-horizon robotic tasks is notoriously difficult, especially when tasks have hierarchical structure and temporal constraints.
TBTs extend conventional behavior trees (BT) used in robotic applications by incorporating temporal properties into their leaf nodes.
This allows TBTs to represents not only the behavioral task structure defined by BT operators such as Sequence, Fallback, and Parallel, but also the task's temporal constraints.
In this work, the constraints are specified in the leaf nodes using Linear Temporal Logic.
To use TBTs as reward function, we provide a translation from TBT into a Petri Net (PN) and show how rewards can be automatically assigned based on the TBT's structure, resulting in a RPN.
In a series of increasingly challenging grid world environments, we demonstrate how the TBT is easily adapted to each environment.
We highlight the potential of TBT-based reward functions by showcasing the performance impact of different reward distribution schemes.
Our experiments show that extrinsic rewards based on TBTs can enable learning where vanilla RL fails and can improve sample efficiency tenfold when compared to intrinsic rewards.
Further, we show that both the TBT structure and the reward distribution functions can be easily modified, providing flexible and intuitive control over the learning progress.
\end{abstract}
}



\section{Introduction}\label{sec:introduction}
Long-horizon sparse reward environments remain a significant challenge for reinforcement learning (RL) methods.
We study the problem of generating a dense reward function using Temporal Behavior Tree (TBT) specifications.
Reward functions must describe the agent's objective in an RL-compatible form, which may be inherently impossible for certain temporal properties, such as task sequences or alternating repetitions \cite{balakrishnan2023symbolic}.
As a consequence, rewards often provide insufficient guidance to an agent toward desirable and safe long-term behavior.

To address these shortcomings, specification languages have been investigated to provide rewards.
The key idea is to automatically derive a reward function from a formal specification that captures the task.
For example in \cite{abate2020temporal}, specifications were translated into automata whose transitions emit rewards.
A major advantage of this approach is that rewards are automatically and systematically distributed across large state spaces.
However, this automation comes at the cost of reduced user control, particularly when certain states or sub-tasks are more critical than others for accomplishing the overall objective.
In practice, users often possess intuitive knowledge about which sub-tasks are key and how their relative importance should be weighted in achieving a long-horizon task.

In this work, we use Temporal Behavior Trees (TBTs) as the specification formalism for reward generation. 
TBTs are inspired by behavior trees.
A Behavior tree (BT) specifies complex sequences of actions for applications in robotics and cyber-physical systems (CPS) using operators for sequential composition, fallbacks, repetitions, and parallel executions.
TBTs combine BTs with temporal formulas in their leaf nodes for monitoring purposes.
An important advantage of TBTs is their flexibility and modularity. 

\begin{figure}[b]
    \centering
    \begin{tikzpicture}
        \node[inner sep=0pt] (img) at (0,0)
            {\includegraphics[width=0.4\linewidth]{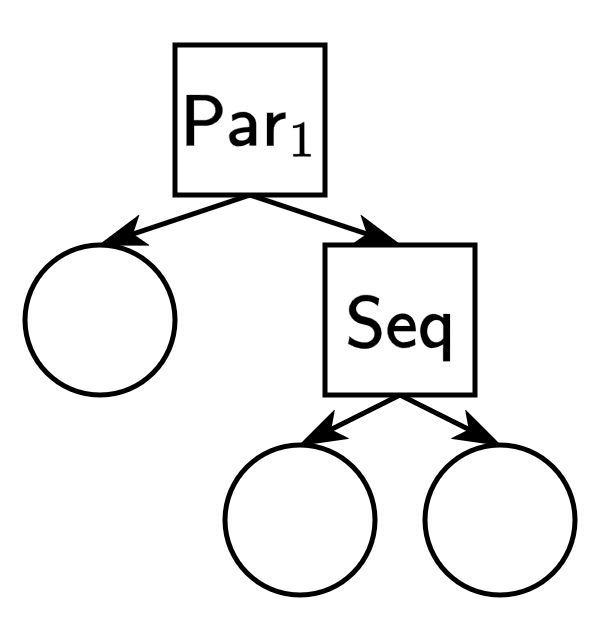}};
        \node at (-1.15,-0.0) {$\F a$};
        \node at (0,-1.2) {$\F b$};
        \node at (1.15,-1.2) {$\G\ b$};
        
    \end{tikzpicture}
    \caption{A TBT specification $\Parallel_1([\F a, \Sequence(\F b, \G b)])$ where the agent should stay in $a$ or eventually reach $b$ and then remain there.}
    \label{fig:exampleintro}
\end{figure}

Figure \ref{fig:exampleintro} illustrates an example TBT specification. 
The root is a parallel node, annotated with the subscript (1), indicating that satisfaction of at least one of its subtrees is sufficient.
The left subtree consists of the temporal logic formula $\F a$, which specifies that proposition $a$ must eventually hold. 
The right subtree is a sequence node that evaluates its children from left to right: first, the agent must eventually satisfy $b$, and subsequently always maintain $b$. 
A corresponding reward interpretation of the TBT could emit $1$ upon satisfying $\F a$, and rewards of $0.75$ and $0.25$ for satisfying $\F b$ and $\G b$, respectively.
This allocation reflects the user's intuition that the parallel branches are equivalent (in accumulated reward) and that achieving $b$ is more important than maintaining it thereafter.


In general, successful or violating leaf nodes serve as landmarks for learning the overall TBT specification with control flow nodes (non-leaf) used to direct the agent's attention towards active leaf nodes.
Upon completion of a node, a reward is emitted and new nodes become active according to the TBT's semantics.
%
%
We formalize this by introducing a construction of Reward Petri Nets (RPNs) derived from TBT specifications.
RPNs extend classical Petri nets with guard predicates, rewards, and actions associated with transition firings.
We show that RPNs naturally encode TBT operators and provide a mechanism for deriving reward functions.
Experimental results demonstrate that by including an RPN in the RL formulation allows learning hard exploration tasks where vanilla RL fails. Further, TBT-based reward functions scale efficiently with increasing environment complexity.

The remainder of the paper is structured as followed: Section \ref{sec:preliminaries} provides background on specification-guided RL, Section \ref{sec:core} provides the construction from TBT to RPN, finally Section \ref{sec:experiments} demonstrates experimental results.

\commentout{
TBT mit Fb Gb motivieren (reward zum erreichen und dann erhalten)

\todo[inline]{till: we are out of space... still need intro, conclusion}
$\F ok \G g$
- RL is powerful but only as powerful as its reward function is in a given environment for a given compute power
- one frontier of RL: long horizon (high step count required to achieve goal) sparse reward (only achieving the final goal and/or distantly interspersed subgoals is rewarded by the environment) 
- long horizon sparse reward is difficult, because untrained agent is unlikely to find reward and therefore can't learn
- reward shaping, reward engineering, reward hacking is practice of defining supplemental reward function to guide learning process
- advantage is more focused learning (less random walk) -> improved sample efficiency
- disadvantage is potential for unintentionally incentivizing undesirable behavior -> agent learns to exploit reward function instead of achieving goal
- Using formal methods to robustly specify desirable behavior and derive reward function from there
- TBT is relatively new formal method with significant potential for defining expressive, modular, and "temporal" behavior
- final sentence?
}

\commentout{
\todo[inline]{sebas: in my opinion, the following paragraph is too much prosa for preliminaries.}
MDPs make use of the Markov Property to reduce the time dimensionality of the control. $P$ and $R$ are defined for exactly one state transition and therefore at any point, the next state and received reward depend only on the current state and chosen action -- not on states visited or actions taken previously. The policy $\pi: S \times A \xrightarrow{} [0, 1] $ is used to stochastically determine the an action given a state $\pi(s, a) = \Pr(a|s)$.

\todo[inline]{sebas: in my opinion, the following paragraph is too much prosa for preliminaries.}
Although the Markov Property remains a fundamental requirement for RL methods and their application, encoding the desired control behavior as a Markovian reward function remains challenging, ...\todo{til: this sentence needs more?} \todo{til: cite}. Aditionally, desired control behavior for temporally structured tasks is inherently non-Markovian \todo{til: cite spatiotemp}. To prevent reward hacking it is common practice to define a \textit{sparse reward} function, which only emits a reward when the agent completes its task. This evades the issue of issue of incentivizing suboptimal behavior or non-Markovian rewards, but the lack of feedback can prevent the agent from achieving its task in the first place, let alone generalizing its behavior and learning to efficiently succeed at it \todo{til: cite sutton}.
}

\section{Related Work}\label{sec:relatedwork}

Long-horizon reinforcement learning presents difficulties in effective credit assignment and exploration due to delayed and sparse rewards. Reward shaping approaches provide intermediate guidance signals to accelerate learning. \cite{ng1999policy} formalized potential-based reward shaping and showed that such shaping preserves optimal policies while improving sample efficiency. Subsequent work has explored automated shaping via learned potentials or auxiliary objectives, including methods that derive shaping rewards from representation learning or progress-based metrics \cite{jaderberg2017reinforcement,arjona2019rudder}.

While reward shaping attempts to improve credit assignment directly, exploration methods instead focus on increasing the probability of encountering informative rewards. Count-based methods encourage agents to visit novel states by approximating state visitation counts in high-dimensional spaces \cite{bellemare2016unifying,tang2017exploration}. Curiosity-based and intrinsic motivation methods reward agents for prediction errors or information gain, promoting exploration of states where dynamics are uncertain \cite{schmidhuber1991possibility,pathak2017curiosity,intrinsic}.

Rather than modifying learning signals or exploration behavior, hierarchical approaches reduce the effective horizon through temporal abstraction. The options framework introduced higher-level actions that operate across multiple time steps \cite{sutton1999between,machado2023}, while later work developed techniques for automatically discovering useful skills \cite{bacon2017option,klissarov2023}. Although these methods can significantly improve learning efficiency, reliably discovering generalizable and transferable abstractions remains challenging.

Often, desirable behavior is known and can be specified using formal methods. Popular examples include various Automata \cite{camacho2021rewardmachine,icarte2022reward,balakrishnan2023symbolic} and Temporal Logics (TL) \cite{li2017robot,icarte2019ltl,abate2020temporal,siqi2024tractable,kantaros2024} with an overlap, where TL formulas are translated into automata. The specification then extends the RL formulation to assign rewards, or guide exploration.

TBTs offer the advantage of strictly higher expressivity than the common Linear TL (LTL) and Signal TL (STL). Furthermore, Petri nets provide a natural representation of TBTs, which prevent state explosion for parallel tasks and permit fine control over dense reward functions. In addition, TBTs facilitate modular construction and human-readability.







\commentout{
\todo[inline]{till: there used to be an example TBT here... maybe put it back, if there is space}
\begin{figure}
\begin{tikzpicture}
    \node {$\Sequence$}[sibling distance=15mm, level distance=12mm]
        child { 
        node[align=center] {
        $\Fallback$}[sibling distance=10mm, level distance=8mm]
            child[level distance=10mm] {node {
            $\Sequence$\\}[sibling distance=28mm, level distance=15mm]
                child[level distance=15mm] {
                    node {\small $b$}}
                child[level distance=25mm] {
                    node {\small $a$ }}
            }
            child[level distance=9mm] {node {...}}
        }
        child[level distance=8mm] {node {\small $c$ }};
\end{tikzpicture}
\end{figure}
}
\section{Preliminaries}\label{sec:preliminaries}
\subsection{Reinforcement Learning}\label{subsec:rl}

Reinforcement Learning methods enable policy synthesis for optimal decision-making.
An agent picks actions according to its policy, then observes the resulting next state of its environment and a reward that evaluates the chosen action.
Formally, this interaction is often modelled as a Markov Decision Process (MDP) \cite{sutton2018reinforcement}.

\begin{definition}[Markov Decision Process]
    A MDP is a tuple $\mathcal{M} = (S, A, P, R, \rho_0)$ with state space $S$, action space $A$, (partial) transition probability distribution $P: S \times A \times S \rightarrow [0, 1]$ with $P(s, a, s') = \Pr(s' \mid s, a)$, reward function $R(s, a, s')$ that maps a transition ($S \times A \times S) $ to $\mathbb{R}$, and initial state distribution $\rho_0 \subset S$, where $s, s' \in S$ and $a \in A$.
\end{definition}

The agent's objective is to learn an optimal policy $\pi^*$ that maximizes the (discounted) total reward, i.e., $\max \Sigma_{t=0}^{\infty} \gamma^tr_t$ where $r_t$ is the given reward at time $t$ and $\gamma \in [0,1]$ is a discount factor.
When the reward function is part of a finite state machine that yields rewards based on the current state, it is referred to as reward machine \cite{icarte2022reward}.
The key idea is to use atomic propositions to abstract environmental information into high-level events, e.g., a position in a grid world is translated into the proposition $\mathit{atCoffeePlace}$.
Let $\mathit{AP}$ be a finite set of atomic propositions and $\Sigma = 2^{\mathit{AP}}$ be a finite alphabet.
Further, we use a labeling function $L: S \rightarrow \Sigma$ that maps states to atomic propositions.




\subsection{Temporal Logic}\label{subsec:tl}
LTL \cite{4567924} formulas have the following syntax:
\[
\varphi := \true \mid p \in \mathit{AP} \mid \neg \varphi \mid \varphi \lor \varphi \mid \X \varphi \mid \varphi\ \Un\ \varphi
\]
As abbreviations, we introduce the \emph{Eventually}-operator $\F \varphi := \true\ \Un\ \varphi$, the \emph{Globally}-operator $\G \varphi := \neg (\F \neg \varphi)$.
Further, we define the bounded operators $\F_{[a,b]} \varphi$ and $\G_{[a,b]} \varphi$ with $a,b \in \mathbb{N}$ by $\bigvee_{i=a}^b \X^i \varphi$ and $\bigwedge_{i=a}^b \X^i \varphi$, recursively.

The semantics of LTL is interpreted over infinite words $w=a_0a_1\dots \in \Sigma^\omega$.

\begin{definition}[LTL semantics]
Let $w$ be an infinite word and $i \in \mathbb{N}$ be a position, the semantics of an LTL formula $\varphi$ is defined as follows:
\[
\def\arraystretch{1.2} \begin{array}{lcl}
 \sigma, i \models \true\ \\
 \sigma, i \models p & \iff & p \in a_i\\
 \sigma, i \models \neg \varphi & \iff & \sigma,i \not\models \varphi\\
 \sigma, i \models \varphi_1 \lor \varphi_2 & \iff & \sigma,i \models \varphi_1 \ \text{or}\ \sigma,i \models \varphi_2\\
 \sigma, i \models \X \varphi & \iff & \sigma,i+1 \models \varphi\\
 \sigma, i \models \varphi_1\ \Un\ \varphi_2 & \iff & \exists k \geq i\ \text{with}\ \sigma, k \models \varphi_2\ \text{and}\\
 & & \ \forall l \in \{i, \dots, k-1\}.\ \sigma, l \models \varphi_1
 \end{array} 
\]
\end{definition}


Let $u=a_0,a_1,\dots,a_n \in \Sigma^*$ be a finite word and let $uw$ be a infinite word where a finite word $u$ is concatenated by an infinite word $w$. 
We use LTL$_3$ \cite{10.1145/2000799.2000800} for interpreting finite words using a three-valued semantics with value set $\mathbb{B}_3 = \{ \true, \false, ? \}$. 

\begin{definition}[LTL$_3$ semantics]\label{def:ltl3}
Let $u$ be a finite word, the semantics of LTL$_3$ is defined as follows:
\[
u \models \varphi :=
\begin{cases}
  \true & \text{if } \forall \sigma \in \Sigma^\omega : u\sigma, 0 \models \varphi,\\
  \false & \text{if } \forall \sigma \in \Sigma^\omega : u\sigma, 0 \not\models \varphi,\\
  ? & \text{otherwise.}
\end{cases}
\]
\end{definition}

LTL$_3$ can be transformed to a finite state machine using the construction presented in \cite{10.1145/2000799.2000800}.
Further, by Definition \ref{def:ltl3}, once a state produces $\true$ and $\false$ it remains producing the same verdict, similar to a sink state.

\subsection{Temporal Behavior Tree}\label{subsec:tbt} 
A TBT \cite{10.1145/3641513.3650180} borrows the structural operators of BTs, but augments leaf nodes with monitoring properties

\begin{definition}[Syntax of Temporal Behavior Trees]
    Let $\varphi$ be an LTL formula, we construct a temporal behavior tree using the following syntax:
     \[\def\arraystretch{1.2} \begin{array}{rlll}
    \tree & := \Leaf(\varphi), & &\leftarrow \text{Leaf node }\\\ 
    & |\ \Fallback([\tree, \dots, \tree]), & &\leftarrow \text{Fallback node }\\\
    & |\ \Parallel_M( [\tree, \dots, \tree]), & M \in \mathbb{N} &\leftarrow \text{Parallel node }\\\
    & |\ \Sequence( [\tree, \cdots, \tree]), & &\leftarrow \text{Sequence node }\\\
    \end{array}\]
\end{definition}

Informally, an LTL formula at the leaf node specifies that the word must satisfy the formula.
Likewise, $\Fallback([\tree_1, \ldots, \tree_k])$ mimics the semantics of a ``fallback'' node in a behavior tree: at least one of the subtrees $\tree_1, \ldots, \tree_k$ must eventually be satisfied by the trace. 
$\Parallel_M([\tree_1, \dots, \tree_k])$ denotes a parallel operator that specifies that at least $M$ distinct subtrees must be satisfied simultaneously by the trace $\sigma$. 
$\Sequence([\tree_1, \cdots, \tree_k])$ is a sequential node that denotes that its children must be satisfied in sequence.
We denote $\sigma \models \tree$ as the trace satisfying the TBT specification.

We introduce a parameterized syntactic shortcut for TBT operators.
Let $x$ range over a finite set $X$. 
We write $\langle x \in X \rangle$ to denote a parameter that is syntactically substituted into all symbols occurring in the following subtrees.
This notation expands to a replication of the subtree for every $x \in X$.
For instance, instead of writing $\Fallback([\Sequence([\F \mathit{redDoor}, \F \mathit{redDoorOpen}]), \allowbreak\Sequence(\allowbreak[\F \mathit{greenDoor}, \F \mathit{greenDoorOpen}])])$, we write $\Fallback\langle\allowbreak x \in \{red, green\}\rangle ([\allowbreak\Sequence([\allowbreak\F \mathit{xDoor}, \F \mathit{xDoorOpen}])])$.

\subsection{Petri Nets}
A Petri net is a mathematical model for describing concurrent, asynchronous, and distributed processes using places, transitions, and tokens \cite{petri1962kommunikation}.
In this paper, places represent TBT nodes, guarded transitions model the satisfaction of a TBT node, and tokens track the progress toward satisfying the TBT.
Let a multiset $M$ over a set $S$ be a function $M: S \rightarrow \mathbb{N}$.

\begin{definition}[Petri Net]
A Petri net is a tuple $\mathcal{N} = (P, T, F, M_0)$ with disjoint finite sets of places $P$ and of transitions $T$, the flow relation $F$ be a multiset over $(P \times T) \cup (T \times P)$, and the initial marking $M_0$ be a multiset over $P$.
\end{definition}

Intuitively, a Petri net defines the flow of tokens from places over transition to other places.
A marking $M$ is a multiset over $P$ and represents the current state of the net, i.e., the number of tokens in place $p$.
A transitions $t \in T$ is firing at marking $M$ if $\forall p \in P, M(p) \ge  F(p, t)$, i.e., input places must have sufficient tokens.
Upon firing a transition $t$, the state of the Petri net changes according to $F$ by consuming tokens from input places and producing tokens to output places, i.e., $\forall p \in P, M'(p) = M(p) - F(p, t) + F(t, p)$.
The Petri net stops when it reaches a dead marking where no transition can fire.
A Petri net is deterministic, if for every reachable marking there is only one transition firing.

\begin{example}
Consider the Petri net shown in Figure \ref{fig:dpn}.
It consists of two transitions $T=\{t_0,t_1\}$ and three places $P=\{q_0,q_1,q_2\}$, with an initial marking in which $q_0$ and $q_2$ contain one token each.
The net is deterministic.
From the initial marking, only $t_0$ is enabled, since $t_1$ requires two tokens in $q_0$.
After firing $t_0$, one token is consumed from $q_0$ and $q_2$ and two tokens are produced in $q_0$.
Thereafter, $t_0$ is disabled, while enough tokens are in $q_0$ to enable $t_1$.
Firing $t_1$ reaches a dead marking as only one token is in $q_1$ from where no transitions are possible.
Note that if $q_2$ was removed, the net would no longer be deterministic, since $t_0$ could then be fired indefinitely, producing an unbounded number of tokens.
\end{example}

\begin{figure}[h!]
    \centering
    \begin{tikzpicture}[node distance=1cm and 2cm,>=stealth',bend angle=45,thick]
    \node [transitionV,label=above:$t_0$] (t0) {};
    \node [transitionV,label=above:$t_1$] (t1) [below right=1cm and 1.5cm of t0] {};
    \node [place,tokens=1,label=below:$q_0$] (q0) [below= of t0] {}
        edge[pre,bend right=70] node[weight] {$2$} (t0)
        edge[post,bend left=70] node[weight] {$1$} (t0)
        edge[post] node[weight] {$2$} (t1);
    \node [place,tokens=1,label=below:$q_2$] (q2) [right= 1.5cm of t0] {}
        edge[post] node[weight] {$1$} (t0);
    \node [place,tokens=0,label=below:$q_1$] (q1) [right= 1.5cm of t1] {}
        edge[pre] node[weight] {$1$} (t1);
    \end{tikzpicture}
    \caption{Deterministic Petri net}
    \label{fig:dpn}
\end{figure}
\section{Building Reward Petri Nets}\label{sec:core}
In this section, we extend Petri nets to Reward Petri nets (RPN), present the construction of (RPN) from TBT specification, and show how rewards are assigned given distribution functions.

RPNs extend Petri nets by outputting rewards and actions upon firing transitions and adding guards to transitions.

\begin{definition}[Reward Petri Net]
A Reward Petri net is a tuple $\mathcal{RN} = (P, T, F, M_0, R, A, \mathit{Act}, V, G)$ which extends a Petri net by rewards, actions, and guards. 
The function $R: T \rightarrow \mathbb{R}$ assigns real-values rewards to each transition, the function $\mathit{Act}: T \rightarrow A$ assigns actions from the action set $A$ to transitions, $V$ is a set of variables used in the guard expressions, and $G: T \rightarrow B(V)$ assigns to each transition a Boolean-valued guard expression over the variables $V$. 
Here, $B(V)$ denotes the set of Boolean expressions formed from the variables in $V$.
\end{definition}

The RPN progresses in the same way as standard Petri net, except that the firing rule for transitions is extended.
A transition $t$ fires if two conditions are satisfied: first, its input places contain the required number of tokens, and second, its guard $G(t)$ evaluates to true.

\subsection{Constructing Reward Petri Nets from TBTs}
We begin by constructing the places $P$, the transitions $T$, the flow relation $F$, and the initial marking $M_0$.
A given TBT specification is recursively translated into a Petri net $N=(P,T, F, M_0)$, in which recursive calls are represented as dotted circles with the corresponding parameter inside:

\begin{description}
\item[$\bullet~\Leaf(\varphi)$:] ~\\
    \begin{tikzpicture}[node distance=1cm and 2cm,>=stealth',bend angle=45,thick]
        \node [place,label=below right:$p_{?\varphi}$] (q0) {};
        \node [transitionV,label=above:$t_{\varphi}$] (t0) [right= 1cm of q0] {};
        \node [place,label=below:$p_\varphi$] (q1) [right= 1cm of t0] {}; 
        \node [transitionD2,label=below left:$t_{\neg\varphi}$] (t1) [above left= 1cm of q0] {};
        \draw[->] (q0) edge node[weight] {$1$} (t0);
        \draw[->] (t0) edge node[weight] {$1$} (q1);
        \draw[->,bend left=70] (t1) edge node[weight2] {$1$} (q0);
        \draw[->,bend left=70] (q0) edge node[weight2] {$1$} (t1);
    \end{tikzpicture}

\item[$\bullet~\Fallback(\lbrack\tree_1, \dots, \tree_n\rbrack)$:]~\\
    \begin{tikzpicture}[node distance=1cm and 2cm,>=stealth',bend angle=45,thick]
        \node [dotted,place,label=center:$\tree_1$] (q0) [] {};
        \node [dotted,place,label=center:$\dots$] (q1) [below= 0.5cm of q0] {};
        \node [dotted,place,label=center:$\tree_n$] (q2) [below= 0.5cm of q1] {};
        \node [transitionV,label=above:$t_{\tree_1}$] (t0) [right= 1cm of q0] {};
        \node [place,label=right:$c_\tree$] (q3) [below= 0.5cm of t0] {};
        \node [transitionV,label=below:$t_{\tree_n}$] (t1) [below= 0.5cm of q3] {};
        \node [place,label=below:$p_{\mathit{sync}}$] (q4) [right= 1cm of q3] {};
        \node [transitionV,label=below:$t_{\tree}$] (t2) [right= 1cm of q4] {};
        \node [place,label=below:$p_\tree$] (q5) [right= 1cm of t2] {};
        
        \draw[->] (q0) edge node[weight] {$1$} (t0);
        \draw[->] (q2) edge node[weight] {$1$} (t1);
        \draw[->,bend left=40] (q3) edge node[weight] {$1$} (t0);
        \draw[->,bend right=40] (q3) edge node[weight] {$1$} (t1);
        \draw[->] (t0) edge node[weight] {$1$} (q4);
        \draw[->] (t1) edge node[weight] {$1$} (q4);
        \draw[->] (q4) edge node[weight] {$1$} (t2);
        \draw[->] (t2) edge node[weight] {$1$} (q5);
    \end{tikzpicture}

\item[$\bullet~\Parallel_M(\lbrack\tree_1, \dots, \tree_n\rbrack)$:]~\\
    \begin{tikzpicture}[node distance=1cm and 2cm,>=stealth',bend angle=45,thick]
        \node [dotted,place,label=center:$\tree_1$] (q0) [] {};
        \node [dotted,place,label=center:$\dots$] (q1) [below= 0.5cm of q0] {};
        \node [dotted,place,label=center:$\tree_n$] (q2) [below= 0.5cm of q1] {};
        \node [transitionV,label=above:$t_{\tree_1}$] (t0) [right= 1cm of q0] {};
        \node [transitionV,label=below:$t_{\tree_n}$] (t1) [right= 1cm of q2] {};
        \node [place,label=$p_{\mathit{sync}}$] (q3) [right= 2cm of q1] {};
        \node [transitionV,label=below:$t_{\tree}$] (t2) [right= 1cm of q3] {};
        \node [place,label=$c_{\tree}$] (q4) [above= 0.5cm of t2] {};
        \node [place,label=$p_{\tree}$] (q5) [right= 1cm of t2] {};
        
        \draw[->] (q0) edge node[weight] {$1$} (t0);
        \draw[->] (q2) edge node[weight] {$1$} (t1);
        \draw[->] (t0) edge node[weight] {$1$} (q3);
        \draw[->] (t1) edge node[weight] {$1$} (q3);
        \draw[->] (q3) edge node[weight] {$M$} (t2);
        \draw[->] (t2) edge node[weight] {$1$} (q5);
        \draw[->,bend right=40] (q4) edge node[weight] {$1$} (t2);
    \end{tikzpicture}

\item[$\bullet~\Sequence(\lbrack\tree_1, \dots, \tree_n\rbrack)$:]~\\
     \begin{tikzpicture}[node distance=1cm and 2cm,>=stealth',bend angle=45,thick]
        \node [dotted,place,label=center:$\tree_1$] (q0) [] {};
        \node [transitionV,label=above:$t_{\tree_1}$] (t0) [right= 0.8cm of q0] {};
        \node [dotted,place,label=center:$\dots$] (q1) [right= 0.8cm of t0] {};
        \node [transitionV,label=above:$t_{\tree_{n-1}}$] (t1) [right= 0.8cm of q1] {};
        \node [dotted,place,label=center:$\tree_n$] (q2) [right= 0.8cm of t1] {};
        \node [transitionV,label=above:$t_{\tree}$] (t2) [right= 0.8cm of q2] {};
        \node [place,label=below:$p_\tree$] (q3) [right= 0.8cm of t2] {};
        
        \draw[->] (q0) edge node[weight] {$1$} (t0);
        \draw[->] (t0) edge node[weight] {$1$} (q1);
        \draw[->] (q1) edge node[weight] {$1$} (t1);
        \draw[->] (t1) edge node[weight] {$1$} (q2);
        \draw[->] (q2) edge node[weight] {$1$} (t2);
        \draw[->] (t2) edge node[weight] {$1$} (q3);
    \end{tikzpicture}
\end{description}

\noindent The initial marking $M_0$ is given by placing one token on each place $p_{?\varphi}$ for which $\nexists p \in P. (P, t_\varphi)\in F \wedge (t_\varphi, p_{?\varphi}) \in F$.

\begin{example}
Consider the TBT  $\Parallel_1([\F a, \Sequence(\F b, \G b)])$ and its corresponding Petri net given in Figure \ref{fig:tbtpetrinet}.
The colored regions indicate how recursively components of the TBT translation are interconnected, so that a token in place $p_{\Parallel}$ represents a satisfying TBT specification. 
\end{example}

\begin{figure}[h!]
    \centering
    \begin{tikzpicture}[node distance=1cm and 2cm,>=stealth',bend angle=45,thick]
        \node [place,tokens=1,label=below:$p_{?\F a}$] (q0) {};
        \node [transitionV,label=above:$t_{\F a}$] (t0) [right= 0.8cm of q0] {};
        \node [place,label=below:$p_{\F a}$] (q1) [right= 0.8cm of t0] {}; 
        \node [transitionV,label=above:$t_{\neg\F a}$] (t1) [above= 0.3cm of q0] {};
        \draw[->] (q0) edge node[weight] {$1$} (t0);
        \draw[->] (t0) edge node[weight] {$1$} (q1);
        \draw[->,bend left=70] (q0) edge node[weight] {$1$} (t1);
        \draw[->,bend left=70] (t1) edge node[weight] {$1$} (q0);
        
        \node [place,tokens=1,label=below:$p_{?\F b}$] (q2) [below= 2.4cm of q0] {};
        \node [transitionV,label=below:$t_{\F b}$] (t2) [right= 0.8cm of q2] {};
        \node [place,label=below:$p_{\F b}$] (q3) [right= 0.8cm of t2] {}; 
        \node [transitionV,label=above right:$t_{\neg\F b}$] (t3) [above= 0.3cm of q2] {};
        \draw[->] (q2) edge node[weight] {$1$} (t2);
        \draw[->] (t2) edge node[weight] {$1$} (q3);
        \draw[->,bend left=70] (q2) edge node[weight] {$1$} (t3);
        \draw[->,bend left=70] (t3) edge node[weight] {$1$} (q2);
        
        \node [transitionV,label=below:$t_1$] (t6) [right= 0.5cm of q3] {};
        
        \node [place,label=below:$p_{?\G b}$] (q4) [right= 0.7cm of t6] {};
        \node [transitionV,label=below:$t_{\G b}$] (t4) [right= 0.8cm of q4] {};
        \node [place,label=below:$p_{\G b}$] (q5) [right= 0.8cm of t4] {}; 
        \node [transitionV,label=above right:$t_{\neg\G b}$] (t5) [above= 0.3cm of q4] {};
        \draw[->] (q4) edge node[weight] {$1$} (t4);
        \draw[->] (t4) edge node[weight] {$1$} (q5);
        \draw[->,bend left=70] (q4) edge node[weight] {$1$} (t5);
        \draw[->,bend left=70] (t5) edge node[weight] {$1$} (q4);
        
        \draw[->] (q3) edge node[weight] {$1$} (t6);
        \draw[->] (t6) edge node[weight] {$1$} (q4);
        
        \node [transitionD,label=below right:$t_{\Sequence}$] (t7) [above = 1.7cm of q5] {};
        \node [place,label=left:$p_{\Sequence}$] (q6) [above left= 0.4cm and 1.4cm of t7] {}; 
        \draw[->] (q5) edge node[weight] {$1$} (t7);
        \draw[->] (t7) edge node[weight] {$1$} (q6);

        \node [transitionH] (t9) [above= 0.6cm of q6] {};
        \node [place,label=above left:$p_{\mathit{sync}}$] (q7) [above= 0.6cm of t9] {};
        \node [transitionV] (t8) [left= 0.5cm of q7] {};
        \node [place, tokens=1,label=right:$c_{\Parallel}$] (q8) [above= 0.3cm of q7] {}; 
        \node [transitionV, label=below:$t_{\Parallel}$] (t10) [right= 0.7cm of q7] {};
        \node [place, label=right:$p_{\Parallel}$] (q9) [right= 0.6cm of t10] {}; 
        
        \draw[->] (q6) edge node[weight] {$1$} (t9);
        \draw[->] (t9) edge node[weight] {$1$} (q7);
        \draw[->] (q1) edge node[weight] {$1$} (t8);
        \draw[->] (t8) edge node[weight] {$1$} (q7);
        \draw[->] (q7) edge node[weight] {$1$} (t10);
        \draw[->] (q8) edge node[weight] {$1$} (t10);
        \draw[->] (t10) edge node[weight] {$1$} (q9);
        
    \draw[draw=blue!50, fill opacity=0.3, fill=blue!20, rounded corners] (-0.8,-1) rectangle (2.8,2);
    \draw[draw=blue!50, fill opacity=0.3, fill=blue!20, rounded corners] (-0.8,-4) rectangle (2.8,-1.4);
    \draw[draw=blue!50, fill opacity=0.3, fill=blue!20, rounded corners] (3.5,-4) rectangle (7.6,-1.4);
    
    \draw[draw=orange!50, fill opacity=0.3, fill=orange!20, rounded corners] (-0.9,-1.3) -- (-0.9,-4.1) -- (7.7,-4.1) -- (7.7,-0.15) -- (3.3,-0.15) -- (3.3,-1.3) -- cycle;
    
    \draw[draw=violet!50, fill opacity=0.3, fill=violet!20, rounded corners] (7.7,-0.1) -- (3.3,-0.1) -- (3.3,2.7) -- (7.7,2.7) -- cycle;

    \end{tikzpicture}
    \caption{The colored regions show the mapping between the TBT specification $\Parallel_1([\F a, \Sequence(\F b, \G b)])$ and its Petri net translation: blue for leaf nodes, orange for sequence, and violet for parallel.}
    \label{fig:tbtpetrinet}
\end{figure}

\subsection{Assigning Guards and Actions}
Next, we assign guards and actions to parts of the Petri net.
We begin by constructing the set $V$ of guard variables and the guard function $G$ for a TBT $\tree$. 
Let $A^{\varphi}$ denote the automaton constructed according to Def.\ \ref{def:ltl3} for an LTL$_3$ formula $\varphi$.
For a TBT $\tree$, let the set of automata be given by $A^\tree = \{ A^\varphi \mid \Leaf(\varphi) \in \tree \}$, i.e., each $\Leaf(\varphi)$ in $\tree$ has a corresponding automaton in $A^\tree$ where all automata receive data from the RL agent's execution and produce verdicts.
The set $V$ is given by $A^\tree$.
We add the following guards to each transition $t_{\varphi}$ and $t_{\neg\varphi}$: $G(t_{\varphi})=\true$ and $G(t_{\neg\varphi})=\false$.
Note that guards ensure that the constructed Petri nets are deterministic as they resolve the non-determinism in the leaf node construction.
The set $A$ consists of the following actions:
(1) $\mathit{start}(A^\varphi)$, which initializes $A^\varphi$ to its initial state where it resumes processing incoming inputs,  
(2) $\mathit{stop}(A^\varphi)$, which halts $A^\varphi$.  
For each transition $t$ with $(t, p_{?\varphi}) \in F$, we assign $\mathit{Act}(t)=\mathit{start(A^\varphi)}$.
Similarly, for each transition $t$ with $(t, p_{\varphi}) \in F$, we assign $\mathit{Act}(t)=\mathit{stop(A^\varphi)}$.
For simplicity, initially all $A^\varphi$ are started.
The guards ensure that tokens only propagate upon satisfaction through the net.
The actions manage the automata and allow what we refer to as \emph{backtracking}.
Backtracking happens when an automaton $A^{\varphi}$ returns $\false$, which means that no future event will result in satisfying the leaf node.
By leveraging TBT semantics, backtracking resets the failed leaf node and allows the agent to try again.
Later, we will see that backtracking results in penalizing the agent's rewards.
Let $\sigma$ be the execution trace of the RL agent with a TBT specification $\tree$.
We consider the case when a token reaches the place $p_\tree$ thereby indicating a successfull TBT RPN run.

\begin{theorem}
$M(p_\tree)=1 \nLeftrightarrow \sigma \models \tree$
\end{theorem}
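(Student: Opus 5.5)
Since the statement asserts a \emph{non}-equivalence, I will not try to relate the net and the trace semantics in general. Instead I will exhibit concrete TBTs and traces for which the biconditional fails. For robustness I will aim to break both directions, although one counterexample suffices.

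Throughout, I read the guards in their intended sense. That is, $G(t_\varphi)$ holds exactly when $A^\varphi$ currently outputs the LTL$_3$ verdict $\true$, and $G(t_{\neg\varphi})$ holds exactly when it outputs $\false$. Firing $t_{\neg\varphi}$ returns the token to $p_{?\varphi}$ and, by the action assignment, executes $\mathit{start}(A^\varphi)$.

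\textbf{Direction ($\Rightarrow$) fails: backtracking.} Take the one-leaf tree $\tree=\Leaf(p)$ with $p\in\mathit{AP}$. Its net is just $p_{?p}\to t_p\to p_p$, together with the self-loop $t_{\neg p}$, and the single leaf place is the output place. Choose a trace $\sigma$ with $p\notin a_0$ and $p\in a_1$.
\begin{itemize}
\item After reading $a_0$, the prefix $a_0$ already determines $\false$ by Definition~\ref{def:ltl3}, so $t_{\neg p}$ fires.
\item This firing restarts $A^p$ on the suffix starting at $a_1$.
\item On that suffix $A^p$ yields $\true$, so $t_p$ fires and the token reaches the output place, giving $M(p_\tree)=1$.
\item Yet $\sigma,0\not\models p$, hence $\sigma\not\models\tree$.
\end{itemize}
The same pattern, with any leaf inside a $\Sequence$, shows the phenomenon is structural: the net effectively checks satisfaction on some \emph{later} suffix chosen by the restarts, not at the position prescribed by the TBT semantics.

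\textbf{Direction ($\Leftarrow$) fails: non-monitorability.} Take $\tree=\Leaf(\G b)$ and let $\sigma$ be the trace with $b\in a_i$ for all $i$. Then $\sigma\models\tree$.
\begin{itemize}
\item For every finite prefix $u$ of $\sigma$, some continuation violates $\G b$, so by Definition~\ref{def:ltl3} the verdict is never $\true$.
\item It is also never $\false$, so the verdict stays $?$ forever.
\item Hence $t_{\G b}$ never fires and $M(p_\tree)=0$ at every step.
\end{itemize}
This is exactly the $\G b$ leaf of the running example $\Parallel_1([\F a,\Sequence(\F b,\G b)])$. So even the paper's own example has satisfying traces, namely those where $b$ holds forever but $a$ never occurs, on which no token reaches $p_{\Parallel}$.

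\textbf{Expected obstacle.} The only delicate point is pinning down the firing semantics precisely enough to make the counterexamples rigorous:
\begin{itemize}
\item how guards are evaluated per step of $\sigma$;
\item that $\mathit{start}(A^\varphi)$ resets the automaton to consume the suffix from the current position;
\item that tokens fire eagerly whenever a guard holds, using determinism from the guards.
\end{itemize}
I would state these conventions explicitly before the counterexamples. With them fixed, each counterexample reduces to a direct verdict computation with Definition~\ref{def:ltl3}, so everything else is routine.
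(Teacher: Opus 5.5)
Your first counterexample uses the same idea as the paper's proof, which takes $\Leaf(\G a)$ with the trace $(a,a,\neg a,a,a)$: backtracking via $\mathit{start}(A^\varphi)$ resets the failed leaf, and a later suffix drives a token into $p_\tree$ even though $\sigma\not\models\tree$. Your atomic leaf $p$ is actually the cleaner instance, because under Definition~\ref{def:ltl3} the monitor for $\G a$ can never output $\true$ (this is exactly your second counterexample), so the paper's instance relies on a finite-trace reading of the leaf monitor, whereas yours works under the semantics as stated.
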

\begin{proof}
For $\Leaf(\G a)$ and a trace $\sigma:= (a,a,\neg a, a, a)$, we have $\sigma \not\models \Leaf(\G a)$, but since backtracking resets the automaton $A^{\G a}$ after observing $\neg a$, a token reaches $p_{\tree}$.
\end{proof}

If we consider only leaf formulas with a corresponding automata that do not produce $\false$-verdicts, e.g., a co-safety property such as $\F a$, then the theorem holds.
Let $\tree'$ be a TBT specification with no automaton in $A^\tree$ that yields $\false$.

\begin{theorem}
$M(p_\tree')=1 \Leftrightarrow \sigma \models \tree'$
\label{thm:nofalse}
\end{theorem}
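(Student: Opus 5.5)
I would prove Theorem~\ref{thm:nofalse} by structural induction on $\tree'$, with a strengthened hypothesis. For every subtree $\tree_i$ of $\tree'$, consider the subnet built for $\tree_i$, started at the position $j$ where its entry places are first marked. The claim is that a token is eventually produced in $p_{\tree_i}$ if and only if the suffix of $\sigma$ beginning at $j$ satisfies $\tree_i$. Moreover, the output transition fires exactly at the first position at which the TBT semantics can certify satisfaction. The hypothesis is relative to the activation time because the sequence construction starts the automata of $\tree_{i+1}$ only when $t_{\tree_i}$ fires, via the $\mathit{start}$ action. The theorem is then the instance where $\tree_i$ is the root, activated at position $0$ by $M_0$. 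I read the guards as firing $t_\varphi$ when $A^\varphi$ reports $\true$ and $t_{\neg\varphi}$ when it reports $\false$; the literal assignment $G(t_\varphi)=\true$ is presumably shorthand for this.

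\textbf{Base case: $\Leaf(\varphi)$.} By hypothesis $A^\varphi$ never outputs $\false$, so the backtracking transition $t_{\neg\varphi}$ can never fire. The token in $p_{?\varphi}$ therefore moves only through $t_\varphi$, which happens exactly when $A^\varphi$ reaches verdict $\true$ on the observed prefix $u$. By Definition~\ref{def:ltl3}, that verdict means every continuation of $u$ satisfies $\varphi$, so in particular $\sigma$ does. Conversely, $\sigma\models\varphi$ means $\varphi$ holds on $\sigma$. Since the verdict is never $\false$, $\varphi$ behaves as a co-safety property on the monitored traces, so some finite prefix is a good prefix and the monitor reaches $\true$. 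This converse direction is the main obstacle. "Never yields $\false$" does not by itself give co-safety: $\G\F a$ has monitor constantly $?$. I would handle it by interpreting the hypothesis, as the preceding remark suggests, as restricting leaves to co-safety formulas, and I would make that assumption explicit. Alternatively, one could interpret satisfaction over finite traces as the $\LTL_3$ verdict $\true$, in which case the equivalence becomes definitional.

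\textbf{Inductive cases.} For $\Sequence$, the token flow $p_{\tree_1}\to t_{\tree_1}\to$ entry of $\tree_2\to\cdots$ together with the $\mathit{start}$ actions shows the following. A token reaches $p_\tree$ iff there are positions $j_1\le j_2\le\dots$ with each $\tree_{i+1}$ satisfied from the point where $\tree_i$ completed. This is exactly sequential satisfaction, and it follows by applying the induction hypothesis to each child at its activation time. For $\Fallback$, the single control token in $c_\tree$ lets exactly one $t_{\tree_i}$ fire. Hence $p_{\mathit{sync}}$, and then $p_\tree$, is marked iff some child produced a token, i.e.\ iff some $\tree_i$ is satisfied. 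For $\Parallel_M$, $t_\tree$ needs $M$ tokens in $p_{\mathit{sync}}$. Each child contributes at most one token, because its subnet is one-safe, which I would check as an auxiliary invariant. So $p_\tree$ is marked iff at least $M$ children are satisfied.

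Determinism from the guards ensures the run is uniquely determined by $\sigma$, so "a token reaches" is well defined. The net is acyclic apart from the self-loops $t_{\neg\varphi}$, which are disabled here. This gives $M(p_{\tree'})\le 1$, so the statement $M(p_{\tree'})=1$ is the right formulation. One subtlety needs care in the $\Parallel$ case: the informal semantics says "simultaneously", while the net counts children satisfied at possibly different times. Since each leaf verdict $\true$ is permanent (sink state), a satisfied child stays satisfied. So I would argue that the two readings coincide for co-safety leaves.
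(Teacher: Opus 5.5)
Your proposal is correct and follows the same structural induction as the paper's proof, with the same per-node arguments: the control place $c_\tree$ admitting a single token for $\Fallback$, the $M$-token threshold at $p_{\mathit{sync}}$ for $\Parallel_M$, and token passing for $\Sequence$. It also makes explicit several points the paper leaves implicit, namely the activation-time strengthening, one-safeness, and the $\G\F a$ caveat in the leaf case, which the paper settles by your second reading, identifying satisfaction of $\Leaf(\varphi)$ with the guard of $t_\varphi$ becoming true, i.e., with the LTL$_3$ verdict $\true$.
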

\begin{proof}
The proof is by structural induction on $\tree$.
A trace satisfies $\Leaf(\varphi)$ if and only if the guard of transition $t_{\varphi}$ is $\true$ and the token reaches $p_{\varphi}$, i.e., $M(p_{\varphi})=1$.
By induction hypothesis, the recursive constructions (depicted by dotted circles) contain a token if and only if the corresponding subtree is satisfied.
For a $\Fallback$-node, the control place $c_\tree$ ensures only a single token reaches $p_{sync}$ and eventually $p_\tree$.
This corresponds to its TBT semantics where at least one of the subtrees must be satisfied.
For a $\Parallel_M$-node, the place $p_{sync}$ gathers $M$ tokens until $t_\tree$ is enabled, i.e., it satisfies its TBT semantics that at least $M$ subtrees are satisfied.
The control place $c_\tree$ ensures that the node can only be satisfied once.
For a $\Sequence$-node, once $\tree_1$ is satisfied the token moves on to the next place, finally reaching $p_\tree$.
\end{proof}

Note that Theorem \ref{thm:nofalse} is generalized to RL agent's executions where the Petri net never fires a transition $t_{\neg\varphi}$, i.e., the TBT specification is satisfied when no transitions 
$t_{\neg\varphi}$ is enabled and a dead marking is reached.

\subsection{Assigning Rewards}
Let $\tree$ be a TBT, and let $\mathit{Fn}=(\mathit{fn}_{\Leaf}, \mathit{fn}_{\Fallback}, \mathit{fn}_{\Parallel_M}, \mathit{fn}_{\Sequence})$ be a collection of assignment functions, one for each node type.
Each assignment function maps transitions to rewards.
For a reward $r \in \mathbb{R}$, the reward distribution function $d_\tree^{\mathit{Fn}}(r)$ is defined by recursion on $\tree$ and applies the assignment functions depending on the node type.

\begin{example}
An assignment function $\mathit{fn}_{\Leaf(\varphi)}= \{R(t_\varphi)=r, R(t_{\neg\varphi})= -0.1 \cdot r\}$ yields a positive reward $r$ on success and reduced negative rewards upon failure.
A function $\mathit{fn}_{\Fallback}(\tree_1, \dots, \tree_n, r) = \{ R(t_{\tree_1}) = 0, \dots, R(t_{\tree_n}) = 0 \} \cup {\bigcup{}}_{i=1}^n  d_{\tree_i}(r)$ and $\mathit{fn}_{\Sequence}(\tree_1, \dots, \tree_n, r) = \{ R(t_{\tree_1}) = 0, \dots, R(t_{\tree_{n}}) = 0 \} \cup \bigcup_{i=1}^{n} d_{\tree_i}(\frac{r}{n}\cdot (i-1))$ do not reward transitions $\tree_i$ of the node but forward the reward to the subnodes instead. 
While $\mathit{fn}_{\Fallback}$ forwards the full reward $r$ to its subtrees, $\mathit{fn}_{\Sequence}$ assign smaller rewards to earlier subtrees and larger rewards to later subtrees.
\label{ex:assign}
\end{example}

Since branching with TBT nodes may alter the reward distribution, the TBT structure can be exploited to control the reward assignment.
Using $\mathit{fn}_\Sequence$ from Example \ref{ex:assign}, the TBT $\Sequence(\F a, \F, b, \F c)$ assigns equal rewards to all leaf nodes.
In contrast, $\Sequence([ \Sequence([ \F a, \F b ]), \F c ] )$ assigns a higher reward to the last leaf, while  $\Sequence([\F a, \Sequence([\F b, \F c])])$ assigns a higher reward to the first leaf.

\subsection{Embedding in Markov Decision Process} \label{sec:mdprpn}
We now define how the reward Petri net is embedded within the Markov Decision Process for RL.

\begin{definition}
A Markov decision process with a reward Petri net (MDPRPN) is a tuple $MP=(M, RN)$ where $M$ is a MDP $\mathcal{M} = (S, A, P, R, \rho_0)$ and $RN$ is a reward Petri net $\mathcal{RN} = (P, T, F, M_0, R, A, \mathit{Act}, V, G)$.
\label{def:mdprpn}
\end{definition}

In general, the MDPRPN state space is comprised of MDP states and RPN markings.
At every environment step, the marking is updated after the RPN fires transitions, based on the verdicts of each started automaton in $A^\tree$.
This yields the new state and reward $R(s,a,s')$ in $M$ is $\sum_{t \in T: t \text{ is firing }} R(t)$.


\section{Experiments}\label{sec:experiments}

\commentout{
REWARD:
- every step gives -0.01 (determined with parameter sweep, distributed discount from from vanilla episode reward meaning every step gets a little negative reward instead of the final reward being reduced based on the number of steps taken)
- TBT rewards are emitted only for leaf nodes with the following scheme: ~T -> T yields 1*weight, ~F -> F yields -0.1*weight (determined with parameter sweep)
- node weights areare determined from tree structure with three possibilities examined:
    (0). every node? not just leafs?
    1. every leaf node is given the same reward (which!? This number is relevant, but I don't know what it should be...) 10?
    2. leaf node rewards are calculated by downward distribution
       (kinda like even splits based on n_children 10->(5, 5)->((1, 1, 1, 1, 1), (2.5, 2.5)))
       Aber mit FBACK auch immer noch 1 -> (1, 1, 1) denke ich? Also erste Version mit der die TBT operatoren eine Rolle spielen und das nächste ist die Version mit dann noch linear steigenden Rewards.
    3. leaf node rewards are rising linearly by downward distribution of reward-range with distribution scheme based on TBT operator

NOTES für Till:
- vergiss nicht, dass Toro Icarte sagt RM sind die natürliche Form für Reward Functions in RL sind
- gib Sebastian die Beispiel Trees mit Rewards
- WüSpace für Infra danken
- blackboard und position needed for remembering which door is what

- obs space like "revisiting intrinsic rewards"
- obs space has additional (n)one-hot encoding of tbt state (a one-hot encoding with all-zeros being valid and representing inactive)
- what envs cannot be solved with vanilla? I tried om, I think? what about up?

- atomic props based entirely on current obs
- tbt nodes can hold state -> passed to agent in additional obs_space dict key if necessary, not necessary if internal state maps to node state, i.e. is already contained in obs
- 

How do I write an experiment section?... What do I need?
- TELL A STORY
- Common Experiment Description:
    - RL algo: PPO, Framework: stable-baselines3, Experiment Management: rl-baselines3-zoo,
    - hyperparameters: taken from A, B and tuned to C (see appendix and supplementary materials) same hyperparams for all experiments with exceptions lr, n_steps (rollout), n_timesteps (training)
    _ Environment Framework: Minigrid
        - obs space: three dimensional encoding of agent field of view with object type, color, state resulting in 7x7x3 shape, tbt state encoding using (n)one-hot encoding for running, satisfied, failed and neither for inactive, optional internal operator state if required (i.e. by complex leaf node automata or timeout/kleene-operators in the future) to preserve fulfillment of the Markov-property. 
        - action space: 7 discrete options for navigation and interaction with turn left/right, step forward, drop/pickup, NOP
        - goal space: normally mission is included in the observation and could encompass tasks such as "put the blue ball near the purple ball" or "get the red key from the yellow room, unlock the red door and go to the goal", but is not necessary to for the static missions examined in this work (i.e. "unlock the door" or pick up the "pick up the "box".)
        - custom tbt wrapper: flatten image observation 147-dimensional vector, handle tbt computation and include in observation as n_nodes-dimensional vector, and optional n_nodes_reporting_internals-dimensional vector, calculates reward and handles additional logging
        - custom features extractor: additional linear layer per present observation space element with feature dimension specified in hyperparams
        - custom callback: receives additional logging information and embeds into tensorboard log, while also outputting to csv for easier access. All log files are included in supplementary.
    - Hardware: figure out CPU and GPU of WüSpace AI workstations (or should be framed as consumer grade?), cite WüSpace DOI, report time per 100k training steps per core

- Here is the interesting part:
    - reward formulations:
        1. plainly distributed over leaf nodes as checkpoints (no regard for tbt structure, similar to trivial, unstructured, or informal hand-shaped reward
        2. plainly distributed over all nodes (while lower reward for individual checkpoints, control nodes can be understood as significant milestones and are rewarded addionally)
        3. taking into account the operator of control nodes, reward is distributed down to leaf nodes using algorithm xyz (focus is on interpreting TBT structure as to derive subtask importance/value instead of explicitly rewarding milestones)
        4. taking into account control node operators, reward-ranges are distributed down to leaf nodes using algorithm abc (in order to encourage further exploration, sequence children's rewards are scaled (linearly in our examples, but alternatives are conceivable), TBT structure can lead to monotonic rising rewards, but can also imply increased significance of a mid-process subgoal over its successor(s), this is in the hands of the TBT constructor)
    - reward formulations are so flexible! We don't yet examine shaping the subgoal/node-rewards (or should we write RM) as Toro Icarte et. al. introduced, nor have we included the internal node states in the reward machine, or discussed the opportunities of altering the reward structure during the learning process for example to de-emphasize small checkpoints in favor of significant milestones (Achtung Wortwiederholung). (This may belong into Outlook.)
    - TBT formulations: we demonstrate an exemplary progression of task complexity and corresponding TBT construction
        1. MiniGrid-Unlock-v0: This environment is solvable with vanilla reinforcement learning. It tasks the agent with moving to the key, picking it up, taking it to the locked door and finally unlocking said door. These sequential steps are reflected in TBT 1 (ref). Figure (ref) shows the training performance of PPO with default environment rewards, as well as the four presented reward formulations. INTERPRETATION
        2. MiniGrid-UnlockPickup-v0: Expanding upon the previous environment, the agent now has the mission of additionally entering the second room, discarding the key, and instead navigating to the box to pick it up. Modifications from TBT 1 to TBT 2 (ref) are highlighted and this specific structure was chosen in anticipation of the following evironments. Note also the designation of the inner Sequence as TBT 1* (or something,) which will be relevant in later constructions. EXAMINATION OF EXPERIMENT RESULTS AND INTERPRETATION
        3. MiniGrid-BlockedUnlockPickup-v0: Further expanding the UnlockPickup problem, this environment requires the agent to pick up and remove a blocking obstacle while preparing to unlock the door. We tackle this added complexity not by prescribing the agent to prioritize either key or obstacle by using the Parallel operator as can be seen in TBT 3. Again note the designation of the inner Sequence this time as TBT 1** (or something.) EXAMINATION OF EXPERIMENT RESULTS AND INTERPRETATION
        4. MiniGrid-ObstructedMaze-2Dlh-v0: FORMULATE two doors now, goal could be behind either one, additionally, keys are hidden in boxes, which must first be opened to discover the key. therefore use fback as color switch with TBT 1*
        This environment is not practicably solvable by vanilla PPO (cite rl-baselines3-zoo for this one and refer to lack of performance by vanilla PPO despite extensive random search hyperparameter sweep) and in discussion of the following environments we will instead compare the performance of our extrinsic, but formal reward with intrinsic reward methods on the same environments as presented by (cite intrinsic rewards from openreview... try to cite from ICLR2023! should we find better comparisons? do we have time?...) EXAMINATION, INTERPRETATION
        5. MiniGrid-ObstructedMaze-2Dlhb-v1: FORMULATE ObstructedMaze counterpart to BlockedUnlockPickup therefore instead use TBT 1** in color switcher. No comparison for this one, but included to showcase incremental evolution of TBT towards next step. EXAMINATION, INTERPRETATION
        6. MiniGrid-ObstructedMaze-1Q-v1: agent starts one room over and must open unlocked door first. showcase annoyingly large reward for opening that dumb door compared to following deeply nested, but more important subgoals for reward scheme 4. Highlights further avenue for improvement in shaped reward schemes.
        MAYBE ALSO RUN THIS WITH TBT 5 FROM OMB?
        6. MiniGrid-ObstructedMaze-2Q-v1: door color no longer unique, may break TBT as atomic propositions are computed based entirely on image observation and thus cannot discern doors of the same color. Still run this!
        
WHERE DO I FIT IN STEP_PENALTY?
WHERE DO I FIT IN ATOMIC PROPOSITIONS WHERE THEY COME FROM AND ALSO LTL3 FORMULAS FOR TBTs?
THIS IS A LIST, NOT A STORY! -> CONNECT THE STEPS.

}

In this section, we first introduce the environment framework and algorithm details before applying TBT-derived RPNs to increasingly complex problems. We showcase the adaption of TBTs with each step in complexity and how different reward assignments along the respective RPNs influence learning performance. Further details (including visualizations) of the examined environments, algorithm hyperparameters, and reward distributions are provided in the appendix.



\subsection{Implementation} \label{subsec:implementation}

The MiniGrid framework is a popular implementation of procedurally generated gridworlds for RL \cite{minigrid}.
Figure \ref{fig:minigrid} depicts a typical MiniGrid environment with the task of locating and picking up a specific goal object (blue circle). This requires the agent (red triangle) to open boxes (gray) to find the red key. Because the agent can only hold one item at a time, blockers (green circle) must be moved before the agent can open the locked red door with key in hand. For the specific environments discussed in this section, the observation space is limited to the symbolic representation of the agent's field-of-view (FOV). We extend the canonical observation for MDPRPN as described in Section \ref{sec:mdprpn}. By default, MiniGrid uses a sparse reward formulation with a discount for episode length: $r=1-0.9(n_\mathit{steps}/n_\mathit{max\_steps})$ where $n_\mathit{steps}$ is the number of steps until successful termination and $n_\mathit{max\_steps}$ is the step limit for episode truncation.

\begin{figure}[hbt]
    \centering
    \includegraphics[width=0.7\linewidth]{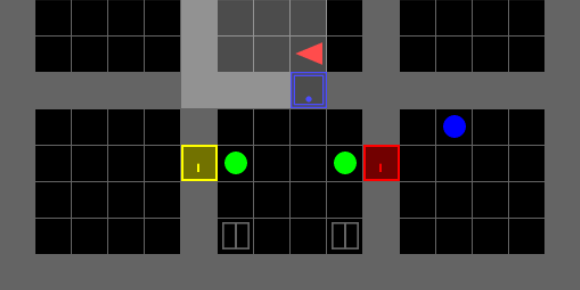}
    \caption{Excerpt of a \texttt{MiniGrid-ObstructedMaze-1Q} generation where the agent's (red triangle) FOV is depicted in light grey.}
    \label{fig:minigrid}
\end{figure}


We apply the PPO algorithm introduced by \cite{ppo} as implemented by Stable Baselines3 \cite{stable-baselines3} in addition to the related RL Baselines3 Zoo \cite{rl-zoo3}. Hyperparameters are included in the supplementary material. For comparability, functionally identical SB3–MiniGrid integrations are used in both TBT and non-TBT experiments.

The TBT is implemented in Python and uses pytransitions \cite{pytransitions} and py-metric-temporal-logic \cite{pyMTL} for its internal logic. LTL$_3$ to automaton translation was performed using LTL$_\mathrm{f}$2DFA \cite{ltlf2dfa} and LamaConv \cite{lamaconv}. Atomic propositions and corresponding TBT transitions are evaluated every step, taking the default environment observation as input. Therefore, no information beyond the agent's observation (i.e. positions) are encoded by the specification, making the specification robust to procedural generation.
The output of the derived RPN is complemented by a constant step reward $r_{step} = -0.9/n_\mathit{max\_steps}$, which distributes the canonical episode length discount across each step. This disincentivizes policies that prefer inaction to avoid potential penalties associated with violations in the TBT specification. 

Experiments were run on workstations generously made available by WüSpace e.V.\footnote{WüSpace is a student aerospace organization. “WüSpace e.V. – Space to Explore,” \url{https://www.wuespace.de}} equipped with a Ryzen 9 7950X 32-core CPU and NVidia GeForce RTX 4090 GPU. Training takes approximately 3.5 hours per $10^6$ timesteps per used processor core.

\subsection{TBT Specification}

In the following, we briefly introduce a series of increasingly difficult MiniGrid environments and the TBTs used to solve them.

\newlength{\tbtwidth}
\setlength{\tbtwidth}{0.798\linewidth} 
\begin{figure}[htpb]
\vspace{-5pt}
    \centering
    \begin{subfigure}
        \centering
        \setlength{\abovecaptionskip}{0pt}
        \includegraphics[width=0.5659164\tbtwidth]{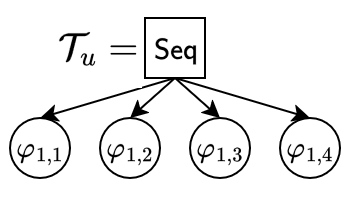}
        \caption{TBT for \texttt{Unlock} environment. Consult Table \ref{tab:leafs} for leaf node formulas.}
        \label{fig:tbt-u}
    \end{subfigure}
    
    \begin{subfigure}
        \centering
        \setlength{\abovecaptionskip}{0pt}
        \includegraphics[width=0.71061093\tbtwidth]{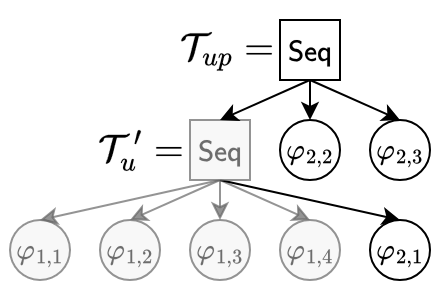}
        \caption{TBT for \texttt{UnlockPickup} environment. Nodes adopted from $\tree_u$ shown in gray (compare with Figure \ref{fig:tbt-u}.)}
        \label{fig:tbt-up}
    \end{subfigure}
    
    \begin{subfigure}
        \centering
        \setlength{\abovecaptionskip}{0pt}
        \includegraphics[width=\tbtwidth]{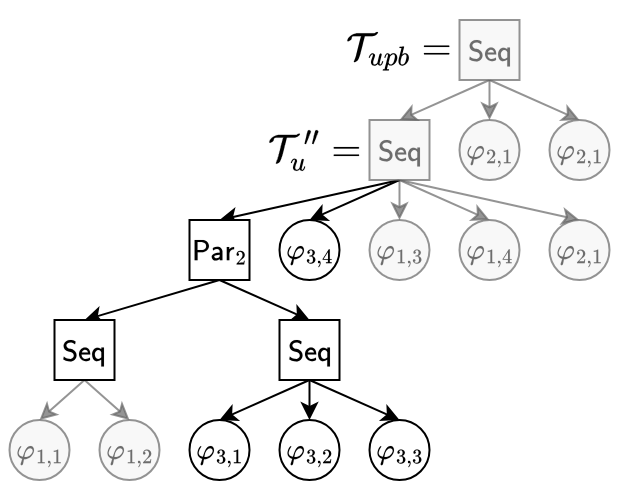}
        \caption{TBT for \texttt{BlockedUnlockPickup} environment. Nodes adopted from $\tree_{up}$ shown in gray (compare with Figure \ref{fig:tbt-up}.)}
        \label{fig:tbt-upb}
    \end{subfigure}
    
    \begin{subfigure}
        \centering
        \setlength{\abovecaptionskip}{0pt}
        \includegraphics[width=0.88102894\tbtwidth]{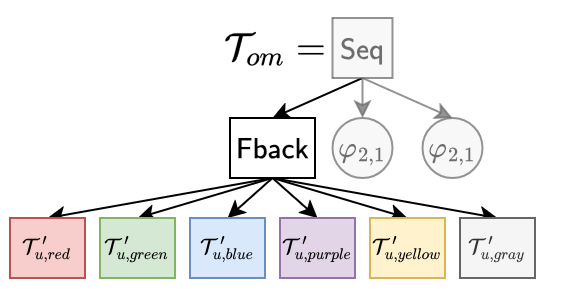}
        \caption{TBT for \texttt{ObstructedMaze-2Dlh} environment. Uses $\tree_{u}'$ (see Figure \ref{fig:tbt-up}) and parametrized $\Fallback\langle x\in\{red,\allowbreak  green,\allowbreak  blue,\allowbreak  purple,\allowbreak  yellow,\allowbreak  gray\}\rangle(\tree_{u,x}')$.}
        \label{fig:tbt-om}
    \end{subfigure}
    
    \begin{subfigure}
        \centering
        \setlength{\abovecaptionskip}{0pt}
        \includegraphics[width=0.88102894\tbtwidth]{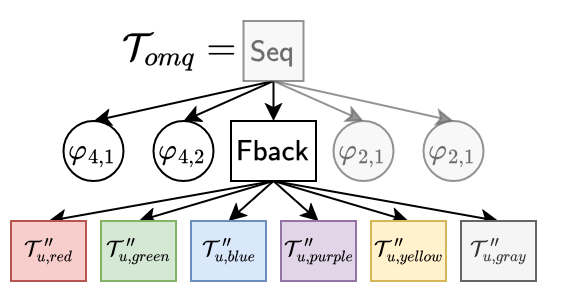}
        \caption{TBT for \texttt{ObstructedMaze-1Q} environment. Uses $\tree_{u}''$ (see Figure \ref{fig:tbt-upb}) and parametrized $\Fallback\langle x\in\{red,\allowbreak  green,\allowbreak  blue,\allowbreak  purple,\allowbreak  yellow,\allowbreak  gray\}\rangle(\tree_{u,x}'')$.}
        \label{fig:tbt-omq}
    \end{subfigure}
\end{figure} 

\begin{table}[htpb]
    \setlength{\belowcaptionskip}{0pt}
    \setlength{\abovecaptionskip}{5pt}
    \caption{Leaf Node Definitions. Note that for conciseness $\mathit{Door}$ and $\mathit{Key}$ are used interchangeably with $\mathit{xDoor}$ and $\mathit{xKey}$. The former describes objects of any color, while the latter applies when $x$ is directly specified, for example using a $\Fallback$ as in $\tree_{om}$ (Figure \ref{fig:tbt-om}).}
    \label{tab:leafs}
    \centering
    \footnotesize 
    \setlength\tabcolsep{2pt}
    \begin{tabular*}{\columnwidth}{@{\extracolsep{\fill}} |c|l|c|l|}
    \hline
    $\mathbf{\varphi}$ & \textbf{Formula} & $\mathbf{\varphi}$ & \textbf{Formula} \\
    \hline
    $\varphi_{1,1}$ & $\F \mathit{seeKey}$ & $\varphi_{1,2}$ & $\F \mathit{holdKey}$\\
    $\varphi_{1,3}$ & $\mathit{holdKey} \Un \mathit{faceDoor}$ & $\varphi_{1,4}$ & $\mathit{faceOpenDoor}$\\
    $\varphi_{2,1}$ & $\F \mathit{seeGoal}$ & $\varphi_{2,3}$ & $\mathit{holdGoal}$\\
    $\varphi_{2,2}$ & \multicolumn{3}{l|}{$\mathit{seeGoal}\ \Un\ (\mathit{faceGoal} \land \mathit{holdNothing})$}\\
    $\varphi_{3,1}$ & \multicolumn{3}{l|}{$\F \mathit{seeDoor}$}\\
    $\varphi_{3,2}$ & \multicolumn{3}{l|}{$\F (\mathit{seeDoor} \land \mathit{holdBlocker})$}\\
    $\varphi_{3,3}$ & \multicolumn{3}{l|}{$\F (\mathit{seeDoor} \land \neg\mathit{holdBlocker} \land \neg\mathit{blockedDoor})$}\\
    $\varphi_{3,4}$ & \multicolumn{3}{l|}{$\F (\mathit{seeDoor} \land \mathit{holdKey} \land \neg\mathit{blockedDoor})$}\\
    $\varphi_{4,1}$ & $\F \mathit{seeUnlocked}$ & $\varphi_{4,2}$ & $\mathit{seeUnlocked}\ \Un\ \mathit{faceOpenDoor}$\\
    \hline
    \end{tabular*}
\end{table}


\begin{figure*}
    \centering
    \setlength{\belowcaptionskip}{0pt}
    \setlength{\abovecaptionskip}{0pt}
    \includegraphics[width=\linewidth]{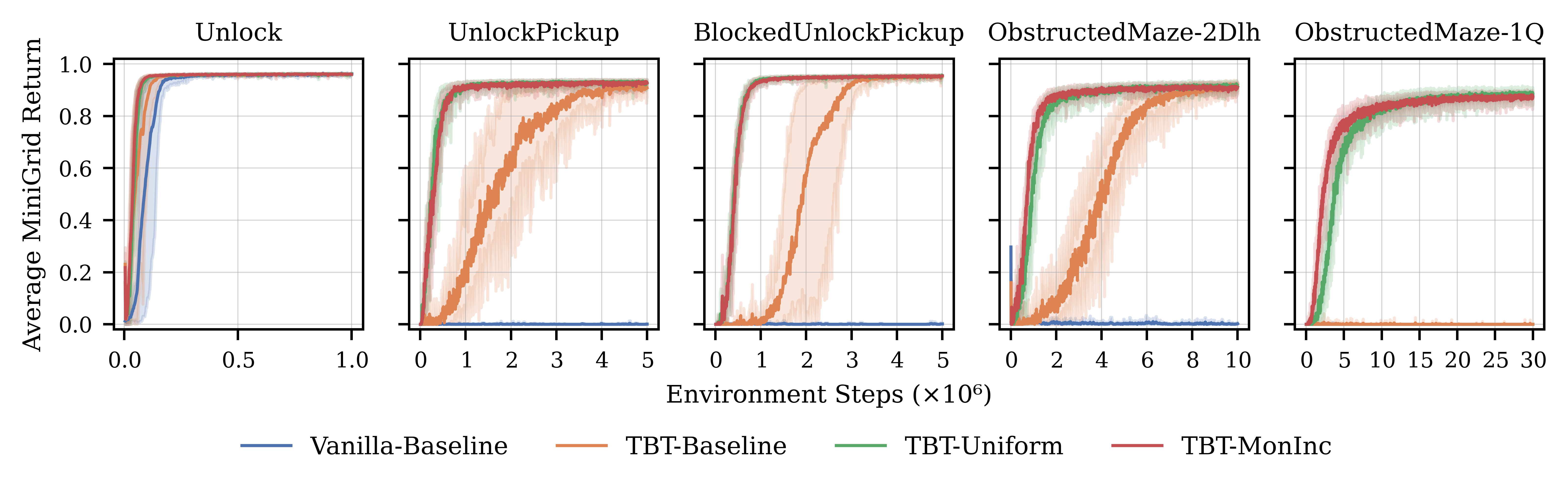}
    \caption{Comparison of training performance on a series of increasingly hard MiniGrid environments.}
    \label{fig:exp-res}
\end{figure*}

\begin{description}

\item[\texttt{Unlock}] serves as the starting point for our analysis. The agent is tasked with moving to the key, picking it up, taking it to the locked door and finally unlocking it. These sequential steps are reflected in $\tree_u$ (Figure \ref{fig:tbt-u}).

\item[\texttt{UnlockPickup}] extends the assignment by additionally requiring the agent to enter the second room, discard the key, and instead navigate towards and pick up the goal object. We modify the previous TBT into $\tree_{up}$ (Figure \ref{fig:tbt-up}) with a nested Sequence and designate the inner Sequence $\tree_u'$ in anticipation of later environments.

\item[\texttt{BlockedUnlockPickup}] requires the agent to pick up and remove a blocking obstacle while preparing to unlock the door. A Parallel operator is applied as can be seen in Figure \ref{fig:tbt-upb} to permit flexible reactions to any procedural generation, instead of prescribing a fixed order to prioritize either key or obstacle before the other. Note again the designation of the inner Sequence as $\tree_u''$.

\item[\texttt{ObstructedMaze-2Dlh}] belongs to environments of the \texttt{ObstructedMaze} class, which expand the navigable environment and increase the number of locked doors hindering access to the goal. This class is considered a hard exploration domain \cite{intrinsic} for which no successful PPO configuration was found so far \cite{rl-zoo3}. In \texttt{ObstructedMaze-2Dlh}, the agent starts between three doors, (no obstacles for now,) and two boxes, containing keys for the two locked doors. We adapt $\tree_{up}$ using a color-switching Fallback around $\tree_u'$ so that the agent receives reward for opening every locked door until Fallback satisfaction when the goal is found.

\item[\texttt{ObstructedMaze-1Q}] (see Figure \ref{fig:minigrid}) reintroduces the blocking obstacles to the locked doors and starts the agent one room and one unlocked door further away from its goal. $\tree_{upb}$ proved effective at opening obstructed doors so we repeat the latest adaption to it and wrap $\tree_u''$ with a Fallback. Additionally, we extend the root Sequence to guide the agent through the first door.

\end{description}

\subsection{Results}

Next, we evaluate and compare four approaches on the introduced environments and TBTs:
(I) \textsc{Vanilla-Baseline} using standard MiniGrid as described in Section \ref{subsec:implementation};
(II) \textsc{TBT-Baseline}, where the TBT is included in the observation space and the unit reward is issued only upon satisfaction of the root node;
(III) \textsc{TBT-Uniform} variant, in which the unit reward is distributed evenly between all TBT nodes through a shared assignment function;
and (IV) \textsc{TBT-MonInc} variant, where the unit reward is distributed only to leaf nodes such that along the RPN, leaf node satisfaction rewards ($t_\varphi$) are strictly monotonically increasing. The intuition behind this is the prioritization of later subgoals and by extension task completion.
Detailed descriptions of the specific reward distributions and resulting reward assignments used in the experiments can be found in Appendix C and the experiment configurations in the supplementary material.
All TBT-based reward schemes additionally incorporate a failure penalty of the form $R(t_{\neg \tree}) = -\alpha R(t_\tree)$, with $\alpha = 0.01$ selected via parameter sweep. 
 
Figure \ref{fig:exp-res} shows training performance averaged over 5 runs with minimum and maximum values shown in reduced opacity. Average MiniGrid Return refers to episode rewards emitted by the default MiniGrid reward definition. When the agent receives RPN output as a reward, default MiniGrid rewards are recorded separately for comparability. Our results show that involving TBTs generally improves sample efficiency compared to the vanilla baseline and specifically enables learning where vanilla RL fails. 
\textsc{TBT-Baseline} demonstrates that using a sparse MDPRPN can have this effect, albeit with high variance and expectedly inferior performance compared to the dense reward functions (III) and (IV). Further, \textsc{TBT-MonInc} exhibits a slight advantage in learning speed for the \texttt{ObstructedMaze} environments.
Importantly, TBT-based approaches permit scaling with environment complexity to efficiently solve hard exploration problems.
Note that when normalizing training results with episode length, a stable policy can be achieved using the \textsc{TBT-MonInc} within an approximate range of $3\cdot10^3$ to $13\cdot10^3$ episodes for the environments beyond the trivial \texttt{Unlock} environment, (which only takes approximately $500$ episodes to train.) Tensorboard files with additional training summaries will be made available.
\section{Conclusion}\label{sec:conclusion}
We studied the problem of generating rewards for long-horizon tasks in reinforcement learning using Temporal Behavior Trees. 
TBTs can easily be adapted to different environments leveraging their high expressivity and modularity.
We introduced Reward Petri nets (RPN) and showed how TBTs can be naturally translated into RPNs. RPNs extend Petri nets by guard predicates, rewards, and actions upon firing transitions.
We then introduced reward distribution functions over RPNs, which propagate rewards through the behavior specification. We demonstrated that these facilitate sample efficient learning of hard exploration problems.

In the future, we plan to adapt value iteration for automatic reward shaping to RPN \cite{icarte2019ltl}. 
Further, TBTs are an ideal formalism for expressing multi-agent RL tasks, which we plan to exploit.
RPNs feature considerable compatibility as (Reward) Petri net were originally invented to handle distributed processes with concurrent tasks. 
We also plan to extent existing TBT monitoring approaches to act as shields during learning. 





\bibliographystyle{named}
\nocite{hpo,rl-zoo3,ppo-hyperparams-aux,rs}
\bibliography{references}


\end{document}